\newcommand{\bw}{\mathbf{w}}
\newcommand{\by}{\mathbf{y}}
\newcommand{\bz}{\mathbf{z}}
\newcommand{\bxi}{\bm{\xi}}
\newcommand{\btheta}{\bm{\theta}}
\newtheorem{thm}{Theorem}
\definecolor{lasallegreen}{rgb}{0.03, 0.47, 0.19}
\definecolor{jonquil}{rgb}{0.98, 0.85, 0.37}
\title{Prediction Focused Topic Models for Electronic Health Records}
\author{
Jason Ren \thanks{equal contribution} \\
   jason\_ren@college.harvard.edu
  \\
  \And
   Russell Kunes \footnotemark[1] \\
   rk3064@columbia.edu
   \\
   \And
   Finale Doshi-Velez \\
   finale@seas.harvard.edu \\
}
\begin{document}

\maketitle
\begin{abstract}
  
Electronic Health Record (EHR) data can be represented as discrete counts over a high dimensional set of possible procedures, diagnoses, and medications. Supervised topic models present an attractive option for incorporating EHR data as features into a prediction problem: given a patient's record, we estimate a set of latent factors that are predictive of the response variable.  However, existing methods for supervised topic modeling struggle to balance prediction quality and coherence of the latent factors. We introduce a novel approach, the prediction-focused topic model, that uses the supervisory signal to retain only features that improve, or do not hinder, prediction performance.  By removing features with irrelevant signal, the topic model is able to learn task-relevant, interpretable topics.  We demonstrate on a EHR dataset and a movie review dataset that compared to existing approaches, prediction-focused topic models are able to learn much more coherent topics while maintaining competitive predictions.

\end{abstract}

\section{Introduction}

Supervised topic models are often sought to balance prediction quality and interpretability (i.e. \citet{hughes2017predictionanti},  \citet{Kuang2017CrimeTM}). However, standard supervised topic models often learn topics that are not discriminative in target space. Even in the best of cases, these methods must explicitly trade-off between predicting the target and explaining the count data well \citep{hughes2017prediction}. In this work, we focus on one common reason why supervised  topic models fail: documents often contain terms with high occurrence that are irrelevant to the task.  For example, modern electronic health records contain features about all facets of our health, many of which may be nearly orthogonal to any specific task (e.g. predicting risk of some disease). The existence of features irrelevant to the supervised task complicates optimization of the trade-off between prediction quality and explaining the count data, and also renders the topics less interpretable.

To address this issue, we introduce a novel supervised topic model, prediction-focused sLDA (pf-sLDA), that explicitly severs the connection between irrelevant features and the response variable and a corresponding variational inference procedure that enforces our parameter constraints. We demonstrate that pf-sLDA outperforms existing approaches with respect to topic coherence on several data sets, while maintaining competitive prediction quality. The full version of this extended abstract can be viewed at: \url{https://arxiv.org/pdf/1910.05495.pdf}.

\section{Related Work}
\label{sec:related_work}

\textbf{Improving prediction quality in supervised topic models.}
Since the original supervised LDA (sLDA) work of \citet{mcauliffe2008supervised}, many works have incorporated the prediction target into the topic model training process in different ways to improve prediction quality, including power-sLDA \citep{zhang2014supervise}, med-LDA \citep{zhu2012medlda}, BP-sLDA \citep{chen2015end}. \citet{hughes2017prediction} pointed out a number of shortcomings of these previous methods and introduced a new objective that weights a combination of the conditional likelihood and marginal data likelihood: $\lambda \log p(\by|\bw) + \log p(\bw)$. They demonstrated the resulting method, termed prediction-constrained sLDA (pc-sLDA),  achieves better empirical results in optimizing the trade-off between prediction quality and explaining the count data and justify why this is the case.  However, their topics are often polluted by irrelevant terms. The pf-sLDA formulation enjoys analogous theoretical properties but effectively removes irrelevant terms, and thus achieves more coherent topics. 

\textbf{Focusing learned topics.} 
The notion of focusing topics in relevant directions is also present in the unsupervised topic modeling literature.  For example, \citet{Wang2016TargetedTM} focus topics by seeding them with keywords; \citet{Kim2012VariableSF} introduce variable selection for LDA, which models some of the vocabulary as irrelevant.  \cite{Fan2017PriorMS} similarly develop stop-word exclusion schemes. However, these approaches adjust topics based on some general notions of "focus", whereas pf-sLDA removes irrelevant signal for a supervised task to explicitly manage a trade-off between prediction quality and explaining the count data.

\section{Background and Notation}

We briefly describe supervised Latent Dirichlet Allocation (sLDA) \citep{mcauliffe2008supervised}, which our work builds off. sLDA models count data (words) as coming from a mixture of $K$ topics $\{\beta_k\}_{k=1}^K$, where each topic $\beta_k \in \Delta^{|V|-1}$ is a categorical distribution over a vocabulary $V$ of $|V|$ discrete features (words). The count data are represented as a collection of $M$ documents, with each document $\bw_d \in \mathbb{N}^{|V|}$ being a vector of counts over the vocabulary. Each document $d$ is associated with a target $y_d$. Additionally, each document has an associated topic distribution $\theta_d \in \Delta^{K-1}$, which generates both the words and the target.
\section{Prediction Focused Topic Models}
\begin{figure*}[hbt!]
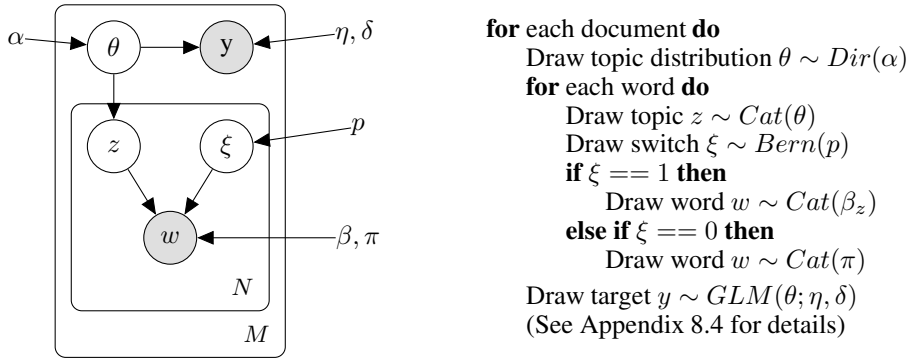

\begin{subfigure}{.5\textwidth}
  \centering
 \tikz{
 \node[obs] (w) {$w$};%
 \node[latent,above=of w,xshift=-0.75cm,fill,yshift=-0.5cm] (z) {$z$}; %
 \node[latent,above=of w,xshift=0.75cm,yshift=-0.5cm] (xi) {$\xi$}; %
 \node[latent, above=of z, yshift=-0.4cm](theta){$\theta$};
 \node[const, above=of z,xshift = -1.3cm](alpha){$\alpha$};
 \node[obs, above= of xi, yshift=-0.4cm](y){y};
 \node[const, above=of xi,xshift = 1.7cm](ed){$\eta, \delta$};
 \node[const, xshift = 2.5cm] (betapi) {$\beta, \pi$};
 \node[const, above = of w, xshift=2.5cm](p) {$p$};
 \plate [inner sep=.2cm,yshift=0cm] {plate1} {(z)(xi)(w)} {$N$}; %
 \plate [inner sep = .2cm,yshift=0cm]
 {plate2} {(theta)(z)(xi)(w)(plate1)} {$M$};
 \edge {z,xi,betapi} {w} 
 \edge {p}{xi}
 \edge {theta}{z,y}
 \edge {alpha}{theta}
 \edge {ed}{y}
 }
  \label{fig:sub1}
\end{subfigure}%
\begin{subfigure}{.5\textwidth}

\centering
\begin{algorithmic}
\For{each document}
\State Draw topic distribution $\theta \sim Dir(\alpha)$
\For{each word}
\State Draw topic $z \sim Cat(\theta)$ 
\State Draw switch $\xi \sim Bern(p)$
\If{$\xi == 1$}
\State Draw word $w \sim Cat(\beta_z)$
\ElsIf{$\xi == 0$}
\State Draw word $w \sim Cat(\pi)$
\EndIf
\EndFor
\State Draw target $y \sim GLM(\theta; \eta, \delta)$ 
\State (See Appendix \ref{sec:impl_deets} for details)
\EndFor
\end{algorithmic}
  
  \label{fig:sub2}
\end{subfigure}
\caption{\textit{Left}: pf-sLDA graphical model. \textit{Right}: pf-sLDA generative process per document}
\label{fig:graphical_models}
\end{figure*}

We now introduce prediction-focused sLDA (pf-sLDA). The fundamental assumption that pf-sLDA builds on is that the vocabulary $V$ can be divided into two disjoint components, one of which is irrelevant to predicting the target variable.  pf-sLDA separates out the words irrelevant to predicting the target, even if they have latent structure, so that the topics can focus on only modelling structure that is relevant to predicting the target. 

\paragraph{Generative Model.}

The pf-sLDA latent variable model has the following components: one channel of pf-sLDA models the count data as coming from a mixture of $K$ topics $\{\beta_k\}_{k=1}^K$, similar to sLDA. The second channel of pf-sLDA models the data as coming from an additional topic $\pi \in \Delta^{|V|-1}$. The target only depends on the first channel, so the second channel acts as an outlet for words irrelevant to predicting the target. We constrain $\beta$ and $\pi$ such that $\beta_k^\top \pi = 0$ for all $k$, such that each word is always either relevant or irrelevant to predicting the target. Which channel a word comes from is determined by its corresponding Bernoulli switch, which has prior $p$. The generative process of pf-sLDA is given in Figure \ref{fig:graphical_models}. In Appendix \ref{sec:lik_lb_deriv}, we prove that a lower bound to the pf-sLDA log likelihood is:
\begin{align}
\log p(\by,\bw) \geq E_{\bxi}[\log p_{\beta}(\by|\bw, \bxi)] + pE_{\btheta}[\log p_{\beta}(\bw|\btheta)] + (1-p)\log p_{\pi}(\bw)
\label{eqn:pfslda_lb}
\end{align}

\textbf{Connection to prediction-constrained models.}
 The lower bound above reveals a connection to the pc-sLDA loss function. Similar connections can be seen in the true likelihood as described in Appendix \ref{sec:pf-sLDA_true_lik_to_pc}, but we use the bound for clarity. The first two terms capture the trade-off between performing the prediction task $E_{\bxi}[\log p_{\beta}(\by|\bw, \bxi)]$ and explaining the words $pE_{\btheta}[\log p_{\beta}(\bw|\btheta)]$, where the switch prior $p$ is used to down-weight the latter task (or emphasize the prediction task). This is analogous to the prediction-constrained objective, but we manage the trade-off through an interpretable model parameter, the switch prior $p$, rather than a more arbitrary Lagrange multiplier $\lambda$.

\section{Inference}

Inference in the pf-sLDA framework corresponds to inference in a graphical model, so advances in Bayesian inference can be applied to solve the inference problem. In this work, we take a variational approach.  Our objective is to maximize the evidence lower bound (full form specified in Appendix \ref{sec:full_elbo}), with the constraint that the relevant topics $\beta$ and additional topics $\pi$ have disjoint support. The key difficulty is that of optimizing over the non-convex set $\{\beta, \pi : \beta^\top \pi = \mathbf{0}\}$. We resolve this with a strategic choice of variational family, which results in a straightforward training procedure that does not require any tuning parameters.
\begin{align*}
    q(\theta, \bz, \bxi| \phi, \varphi, \gamma) &=  \prod_d q(\theta_d | \gamma_d) \prod_n q(\xi_{dn} |\varphi) q (z_{dn} | \phi_{dn}) \\
    \theta_d | \gamma_d &\sim \text{Dir}(\gamma_d),\,
     z_{dn} | \phi_{dn} \sim \text{Cat}(\phi_{dn}),\, 
     \xi_{dn} | \varphi \sim \text{Bern}(\varphi_{w_{dn}})
\end{align*}
The proof of why this variational family enforces our desired constraint is given in Appendix \ref{sec:vfamily_thm_proofs}. To train, we run stochastic gradient descent on the evidence lower bound (ELBO).

\section{Experimental Results}
\subsection{Experimental Set-Up}
\textbf{Metrics.}
We wish to assess prediction quality and interpretability of learned topics. To measure prediction quality, we use RMSE for real targets and AUC for binary targets. To measure interpretability of topics, we use normalized pointwise mutual information coherence, which was shown by \citet{Newman2010AutomaticEO} to be the metric that most consistently and closely matches human judgement in evaluating interpretability of topics. See Appendix \ref{sec:coherence_deets} for coherence calculation details.

\textbf{Baselines.}
The recent work in \cite{hughes2017prediction} demonstrates that pc-sLDA outperforms other supervised topic modeling approaches, so we use pc-sLDA as our main baseline. We also include standard sLDA \citep{mcauliffe2008supervised} for reference.

\textbf{Data Sets}.
 We run our model and baselines on: 
 \begin{itemize}
    \item Pang and Lee's movie review data set \citep{Pang2005SeeingSE}: 5006 movie reviews, with integer ratings from $1$ (worst) to $10$ (best) as targets.
    \item Electronic health records (EHR) of patients with Autism Spectrum Disorder (ASD) \citep{Masood2018APV}: 3804 EHRs, with binary indicator of epilepsy as target.
\end{itemize}
We use a movie review dataset in addition to the ASD dataset, as the movie review dataset is publicly available (so results can be replicated), and results do not require expertise to interpret. (see Appendix \ref{sec:dataset_deets} for details):

\textbf{Implementation details}. Refer to Appendix \ref{sec:impl_deets}

\begin{figure*}[t]
\begin{subfigure}{.5\textwidth}
  \centering
  \begin{tabular}{|p{2cm}||p{1.5cm}|p{1.5cm}|}
 \hline
\multicolumn{3}{|c|}{Pang and Lee Movie Reviews} \\
 \hline
  Model & Coherence & RMSE  \\

\hline
 sLDA & 0.362 (0.101) & 1.682 (0.021) \\
 \hline
pc-sLDA & 1.296 (0.130) & \textbf{1.298} (0.015) \\
 \hline 
 pf-sLDA & \textbf{2.810} (0.092) & 1.305 (0.024) \\
 
 \hline 

\end{tabular}

\label{fig:sub1}
\end{subfigure}
\begin{subfigure}{.5\textwidth}
 \begin{tabular}{|p{2cm}||p{1.5cm}|p{1.5cm}|}
 \hline
  \multicolumn{3}{|c|}{ASD Dataset} \\
  \hline
  Model & Coherence & AUC  \\
\hline
 sLDA & 1.412 (0.113) & 0.590 (0.013)\\
 \hline
pc-sLDA & 2.178 (0.141)  & 0.701 (0.015) \\
\hline
pf-sLDA & \textbf{2.639} (0.091) & \textbf{0.748}  (0.013)\\
 \hline 
 \end{tabular}

\end{subfigure}

\caption{ Mean and (SD) across 5 runs for topic coherence (higher is better) and RMSE (lower is better) or AUC (higher is better) on held-out test sets. Final models were chosen based on a combination of validation coherence and RMSE/AUC. pf-sLDA produces topics with much higher coherence over both data sets, while maintaining similar prediction performance.}
\label{fig:main_res}
\end{figure*}
\begin{table*}[h!]
\centering
\begin{tabular}{|p{1cm}||p{3.7cm}|p{3.7cm}|p{3.7cm}|}
 \hline
\multicolumn{4}{|c|}{Pang and Lee Movie Reviews} \\
 \hline
     & sLDA & pc-sLDA & pf-sLDA\\
 \hline
 High & motion, way, love, performance, \textcolor{lasallegreen}{best}, picture, films, 
character, characters, life &
\textcolor{lasallegreen}{best}, little, time, \textcolor{lasallegreen}{good}, don, picture, year, rated, films
just & 
\textcolor{lasallegreen}{brilliant}, \textcolor{lasallegreen}{rare}, \textcolor{lasallegreen}{perfectly}, true, \textcolor{lasallegreen}{oscar}, documentary, \textcolor{lasallegreen}{wonderful}, 
\textcolor{lasallegreen}{fascinating}, \textcolor{lasallegreen}{perfect}, \textcolor{lasallegreen}{best} \\
\hline \hline

Low & plot, time, \textcolor{red}{bad},  \textcolor{lasallegreen}{funny},  \textcolor{lasallegreen}{good},  \textcolor{lasallegreen}{humor}, little, isn, action & script, year, little,  \textcolor{lasallegreen}{good}, don, look, rated, picture, just, films 
& \textcolor{red}{awful}, \textcolor{red}{stupid}, gags, \textcolor{red}{dumb}, \textcolor{red}{dull}, sequel, \textcolor{red}{flat}, \textcolor{red}{worse}, \textcolor{red}{ridiculous}, \textcolor{red}{bad} \\

\hline \hline
\multicolumn{4}{|c|}{ASD} \\
 \hline
     & sLDA & pc-sLDA & pf-sLDA \\
 \hline
High & Intellect disability  & Infantile cerebral palsy & \textcolor{lasallegreen}{Other convulsions} \\
& Esophageal reflux &  Congenital quadriplegia & Aphasia \\
& Hearing loss & Esophageal reflux & \textcolor{lasallegreen}{Convulsions} \\
& Development delay & fascia Muscle/ligament dis & Central hearing loss \\
& Downs syndrome & Feeding problem &  \textcolor{lasallegreen}{Grand mal status}\\
\hline 
Low & Otitis media & Accommodative esotropia & Autistic disorder \\
& Asthma & Joint pain-ankle & Diabetes Type 1 c0375114\\
& Downs syndrome & Congenital factor VIII & Other symbolic dysfunc \\
& Scoliosis & Fragile X syndrome & Diabetes Type 1 c0375116 \\
& Constipation& Pain in limb & Diabetes Type 2 \\
\hline 

\end{tabular}

\caption{ We list the most probable words in the topics with the highest and lowest regression coefficient for each model for each dataset. Words expected to be in a high regression coefficient topic are listed in green, and words expected to be in a low regression coefficient topic are listed in red. It is clear that the topics learned by pf-sLDA are the most coherent and contain the most words with task relevance.}
\label{tab:topics}
\end{table*}

\subsection{Results}

\textbf{pf-sLDA learns the most coherent topics.} 
Across data sets, pf-sLDA learns the most coherent topics by far (see Figure \ref{fig:main_res}). pc-sLDA improves on topic coherence compared to sLDA, but cannot match the performance of pf-sLDA. Qualitative examination of the topics in Table \ref{tab:topics} supports the claim that the pf-sLDA topics are more coherent, more interpretable, and more focused on the supervised task. 

\textbf{Prediction quality of pf-sLDA remains competitive.}
pf-sLDA produces similar prediction quality compared to pc-sLDA across data sets (see Figure \ref{fig:main_res}). Both pc-sLDA and pf-sLDA outperform sLDA in prediction quality. In the best performing models of pf-sLDA for both data sets, generally between $10\%$ and $20\%$ of the words were considered relevant.



 \section{Conclusion}
 In this paper, we introduced prediction-focused supervised LDA, whose vocabulary selection procedure improves topic coherence of supervised topic models while maintaining competitive prediction quality. Future work could include establishing additional theoretical properties of the pf-sLDA variable selection procedure, and applying our trick of managing trade-offs within a graphical model for variable selection in other generative models.

\bibliography{refs} 

\begin{thebibliography}{13}
\providecommand{\natexlab}[1]{#1}
\providecommand{\url}[1]{\texttt{#1}}
\expandafter\ifx\csname urlstyle\endcsname\relax
  \providecommand{\doi}[1]{doi: #1}\else
  \providecommand{\doi}{doi: \begingroup \urlstyle{rm}\Url}\fi

\bibitem[Chen et~al.(2015)Chen, He, Shen, Xiao, He, Gao, Song, and
  Deng]{chen2015end}
Jianshu Chen, Ji~He, Yelong Shen, Lin Xiao, Xiaodong He, Jianfeng Gao, Xinying
  Song, and Li~Deng.
\newblock End-to-end learning of lda by mirror-descent back propagation over a
  deep architecture.
\newblock In \emph{Advances in Neural Information Processing Systems}, pages
  1765--1773, 2015.

\bibitem[Fan et~al.(2017)Fan, Doshi-Velez, and Miratrix]{Fan2017PriorMS}
Angela Fan, Finale Doshi-Velez, and Luke Miratrix.
\newblock Prior matters: simple and general methods for evaluating and
  improving topic quality in topic modeling.
\newblock 2017.

\bibitem[Hughes et~al.(2017{\natexlab{a}})Hughes, Hope, Weiner, McCoy, Perlis,
  Sudderth, and Doshi-Velez]{hughes2017predictionanti}
Michael~C Hughes, Gabriel Hope, Leah Weiner, Thomas~H McCoy, Roy~H Perlis,
  Erik~B Sudderth, and Finale Doshi-Velez.
\newblock Prediction-constrained topic models for antidepressant
  recommendation.
\newblock \emph{arXiv preprint arXiv:1712.00499}, 2017{\natexlab{a}}.

\bibitem[Hughes et~al.(2017{\natexlab{b}})Hughes, Weiner, Hope, McCoy~Jr,
  Perlis, Sudderth, and Doshi-Velez]{hughes2017prediction}
Michael~C Hughes, Leah Weiner, Gabriel Hope, Thomas~H McCoy~Jr, Roy~H Perlis,
  Erik~B Sudderth, and Finale Doshi-Velez.
\newblock Prediction-constrained training for semi-supervised mixture and topic
  models.
\newblock \emph{arXiv preprint arXiv:1707.07341}, 2017{\natexlab{b}}.

\bibitem[Kim et~al.(2012)Kim, Chung, and Oh]{Kim2012VariableSF}
Dongwoo Kim, Yeonseung Chung, and Alice~H. Oh.
\newblock Variable selection for latent dirichlet allocation.
\newblock \emph{ArXiv}, abs/1205.1053, 2012.

\bibitem[Kuang et~al.(2017)Kuang, Brantingham, and Bertozzi]{Kuang2017CrimeTM}
Da~Kuang, P.~Jeffrey Brantingham, and Andrea~L. Bertozzi.
\newblock Crime topic modeling.
\newblock \emph{Crime Science}, 6:\penalty0 1--20, 2017.

\bibitem[Masood and Doshi-Velez(2018)]{Masood2018APV}
M.~Arjumand Masood and Finale Doshi-Velez.
\newblock A particle-based variational approach to bayesian non-negative matrix
  factorization.
\newblock \emph{J. Mach. Learn. Res.}, 20:\penalty0 90:1--90:56, 2018.

\bibitem[Mcauliffe and Blei(2008)]{mcauliffe2008supervised}
Jon~D Mcauliffe and David~M Blei.
\newblock Supervised topic models.
\newblock In \emph{Advances in neural information processing systems}, pages
  121--128, 2008.

\bibitem[Newman et~al.(2010)Newman, Lau, Grieser, and
  Baldwin]{Newman2010AutomaticEO}
David Newman, Jey~Han Lau, Karl Grieser, and Timothy Baldwin.
\newblock Automatic evaluation of topic coherence.
\newblock In \emph{HLT-NAACL}, 2010.

\bibitem[Pang and Lee(2005)]{Pang2005SeeingSE}
Bo~Pang and Lillian Lee.
\newblock Seeing stars: Exploiting class relationships for sentiment
  categorization with respect to rating scales.
\newblock In \emph{ACL}, 2005.

\bibitem[Wang et~al.(2016)Wang, Chen, Fei, Liu, and Emery]{Wang2016TargetedTM}
Shuai Wang, Zhiyuan Chen, Geli Fei, Bing Liu, and Sherry Emery.
\newblock Targeted topic modeling for focused analysis.
\newblock In \emph{KDD}, 2016.

\bibitem[Zhang and Kjellstr{\"o}m(2014)]{zhang2014supervise}
Cheng Zhang and Hedvig Kjellstr{\"o}m.
\newblock How to supervise topic models.
\newblock In \emph{European Conference on Computer Vision}, pages 500--515.
  Springer, 2014.

\bibitem[Zhu et~al.(2012)Zhu, Ahmed, and Xing]{zhu2012medlda}
Jun Zhu, Amr Ahmed, and Eric~P Xing.
\newblock Medlda: maximum margin supervised topic models.
\newblock \emph{Journal of Machine Learning Research}, 13\penalty0
  (Aug):\penalty0 2237--2278, 2012.

\end{thebibliography}
\bibliographystyle{ieeetr}

\newpage
\section{Appendix}

\subsection{Variational Family}
\label{sec:vfamily_thm_proofs}
We show how our choice of variational family incorporates our desired constraint in the model parameters. The constraint we wish to satisfy is $\beta^\top \pi = \bold{0}$. Our choice of variational family is:
\begin{align*}
    q(\theta, \bz, \bxi| \phi, \varphi, \gamma) &=  \prod_d q(\theta_d | \gamma_d) \prod_n q(\xi_{dn} |\varphi) q (z_{dn} | \phi_{dn}) \\
    \theta_d | \gamma_d &\sim \text{Dir}(\gamma_d) \\
     z_{dn} | \phi_{dn} &\sim \text{Cat}(\phi_{dn}) \\ 
     \xi_{dn} | \varphi &\sim \text{Bern}(\varphi_{w_{dn}}) \\ 
\end{align*}
where $d$ indexes over the documents and $n$ indexes over the words in each document. We first propose two theorems relating to the model.

\begin{thm}
Suppose that the channel switches $\xi_d$ and the document topic distribution $\theta_d$ are conditionally independent in the posterior for all documents, then $\beta$ and $\pi$ have disjoint supports over the vocabulary.
\end{thm}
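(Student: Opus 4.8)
The plan is to translate the conditional-independence hypothesis into an algebraic condition on $\beta$ and $\pi$ by writing down the joint posterior over $(\btheta_d,\bxi_d)$ for a single document. First I would marginalize out the per-word topic assignments $\bz_d$: each $z_{dn}$ enters only through $\theta_{d,z_{dn}}\,\beta_{z_{dn},w_{dn}}^{\xi_{dn}}\pi_{w_{dn}}^{1-\xi_{dn}}$, and since these are independent across $n$ given $(\btheta_d,\bxi_d)$, summing over $z_{dn}$ replaces the topic-conditional emission by the mixture probability $(\btheta_d^\top\beta)_{w_{dn}}=\sum_k\theta_{dk}\beta_{k,w_{dn}}$ when $\xi_{dn}=1$ and leaves $\pi_{w_{dn}}$ when $\xi_{dn}=0$. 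This gives $p(\btheta_d,\bxi_d\mid\bw_d,y_d)\propto\bigl[p(\btheta_d)p(y_d\mid\btheta_d)\bigr]\bigl[\prod_n p^{\xi_{dn}}(1-p)^{1-\xi_{dn}}\bigr]\prod_n(\btheta_d^\top\beta)_{w_{dn}}^{\xi_{dn}}\pi_{w_{dn}}^{1-\xi_{dn}}$, where the first bracket depends on $\btheta_d$ only and the second on $\bxi_d$ only.

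Next I would argue that, because the Dirichlet prior has full support on the simplex, the GLM likelihood is everywhere positive, and $p\in(0,1)$, the two brackets are strictly positive, so the posterior factorizes as (a function of $\btheta_d$)$\times$(a function of $\bxi_d$) if and only if the remaining coupling term $\prod_n(\btheta_d^\top\beta)_{w_{dn}}^{\xi_{dn}}\pi_{w_{dn}}^{1-\xi_{dn}}$ does. Fixing all switches but one and taking the ratio of the coupling term at $\xi_{dn}=1$ versus $\xi_{dn}=0$ shows that for every word $w$ with $\pi_w>0$ the ratio $(\btheta_d^\top\beta)_w/\pi_w$ must be independent of $\btheta_d$; hence the affine map $\btheta_d\mapsto\sum_k\theta_{dk}\beta_{k,w}$ is constant on the simplex, and evaluating it at the vertices $e_1,\dots,e_K$ forces $\beta_{1,w}=\cdots=\beta_{K,w}$. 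Words with $\pi_w=0$ need no argument (they are automatically outside $\mathrm{supp}(\pi)$ and their switch is deterministically $1$, so conditional independence holds trivially); for words with $\pi_w>0$, a $\beta$-column that is constant across topics is either identically zero — so $w\notin\bigcup_k\mathrm{supp}(\beta_k)$ — or a prediction-irrelevant, topic-uninformative coordinate that by the model's non-degeneracy convention belongs in $\pi$ rather than $\beta$; either way $\mathrm{supp}(\pi)\cap\mathrm{supp}(\beta_k)=\emptyset$, i.e. $\beta_k^\top\pi=0$ for all $k$.

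The step I expect to be the main obstacle is exactly this last one: conditional independence by itself only shows that each $\beta$-column restricted to $\mathrm{supp}(\pi)$ is constant across topics, and upgrading "constant column" to "zero column" requires the mild (and arguably built-in) assumption that every word kept in the relevant channel actually discriminates among the $K$ topics; I would state this non-degeneracy explicitly at the outset. A secondary, purely technical point is the passage from "constant on the open simplex (the posterior support)" to "constant at the vertices," which I would justify by continuity of the affine map $(\btheta_d^\top\beta)_w$. I would also note that the argument only uses the hypothesis for single-word documents — or, equivalently, for any one document containing each vocabulary word — so nothing is needed about the corpus beyond full vocabulary coverage.
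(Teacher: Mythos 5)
Your proof is correct and, in its setup, mirrors the paper's: both reduce the conditional-independence hypothesis to the question of when the coupling term $\prod_n p_\beta(w_n\mid\theta)^{\xi_n}\pi_{w_n}^{1-\xi_n}$ (with $z$ marginalized out, so $p_\beta(w\mid\theta)=\sum_k\theta_k\beta_{k,w}$) can factor as $r(\theta)s(\xi)$. Where you diverge is the endgame. The paper argues that factorization forces each $\xi_n$ to be deterministic in the posterior and then chains to its Theorem 2 (a point-mass posterior on $\xi$ is equivalent to $\beta^\top\pi=0$); you instead extract the algebraic consequence directly, via a ratio argument showing that $(\theta^\top\beta)_w/\pi_w$ must be independent of $\theta$ for every $w\in\mathrm{supp}(\pi)$, then evaluate the affine map at the simplex vertices. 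Your route is more self-contained, and — more importantly — it surfaces a real gap that the paper's proof glosses over: if some word $w$ has $\pi_w>0$ and a constant \emph{nonzero} column $\beta_{1,w}=\dots=\beta_{K,w}=c>0$, the coupling factor for that word is $I(\xi_n=0)+I(\xi_n=1)\,c$, a function of $\xi_n$ alone, so the posterior still factorizes (and $\xi_n$ is \emph{not} deterministic), yet the supports are not disjoint; taking $\beta_1=\dots=\beta_K=\pi$ gives a literal counterexample to the theorem as stated. The paper's assertion that factorization holds ``only if $\xi_n=0$ or $\xi_n=1$ with probability 1'' silently excludes exactly this case. Your explicit non-degeneracy assumption (every word kept in the relevant channel actually discriminates among the $K$ topics) is the hypothesis needed to close the argument, and you are right that it should be stated up front. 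The secondary points you flag (strict positivity of the $\theta$-only and $\xi$-only brackets, continuity to pass from the interior of the simplex to its vertices, full vocabulary coverage) are all handled correctly.
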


\begin{proof} For simplicity of notation, we assume a single document and hence drop the subscripts on $\xi_d$ and $\theta_d$. All of the arguments are the same in the multi-document case.
If $\xi$ and $\theta$ are conditionally independent in the posterior, then we can factor the posterior as follows: $p(\xi, \theta | \bw, y) = p(\xi | \bw, y) p(\theta | \bw, y)$. We expand out the posterior:
\begin{align*}
    p(\xi, \theta | \bw, y) &\propto p(\xi) p(\theta) p(\bw, y | \theta, \xi) \\
    &\propto p(\xi) p(\theta) p(y | \theta) \prod_n p_\beta(w_n | \theta) ^{\xi_n} p_\pi(w_n)^{1 - \xi_n} \\
    &= f(\theta) g(\xi) \prod_n p_\beta(w_n | \theta) ^{\xi_n}
\end{align*}

for some functions $f$ and $g$. Thus we see that we must have that $ \prod_n p_\beta(w_n | \theta) ^{\xi_n}$  factors into some $r(\theta) s(\xi)$. We expand $\prod_n p_\beta(w_n | \theta) ^{\xi_n}$:
\begin{align*}
    p_\beta(w_n | \theta)^{\xi_n} &= \left( \sum_k \beta_{k, w_n} \theta_k\right)^{\xi_n} \\
    &= I(\xi_n = 0) + I(\xi_n = 1) \left( \sum_k \beta_{k, w_n} \theta_k\right) 
\end{align*}
So that we can express the product as:
\begin{align*}
    \prod_n p_\beta(w_n | \theta)^{\xi_n} &= \prod_n \left\{ I(\xi_n = 0) + I(\xi_n = 1) \left( \sum_k \beta_{k, w_n} \theta_k\right)\right\}
\end{align*}
In order to further simplify, let $\beta_0 = \{n : \sum_k \beta_{k, w_n} = 0\}$ and $\beta_{>} = \{n : \sum_k \beta_{k, w_n} > 0\}$. In other words $\beta_0$ is the set of $n$ such that the word $w_n$ is not supported by $\beta$, and $\beta_{>}$ is the set of $n$ such that the word $w_n$ is supported by $\beta$.

We can rewrite the above as:
\begin{align*}
    \prod_n p & _\beta(w_n | \theta)^{\xi_n} = \left(\prod_{n \in \beta_0} I(\xi_n = 0) \right) \left(\prod_{n \in \beta_>} \left\{ I(\xi_n = 0) +  I(\xi_n = 1)  \sum_k \beta_{k, w_n} \theta_k \right\} \right)
\end{align*}

Thus, we see that we can factor $\prod_n p_\beta(w_n | \theta) ^{\xi_n}$ as a function of $\theta$ and $\xi$ into the form $r(\theta) s(\xi)$ only if $\xi_n = 0$ or $\xi_n =1$ with probability $1$. We can check that this implies $\beta_k^\top \pi = 0$ for each $k$ by the result of Theorem $2$. 
\end{proof}

\begin{thm}
$\beta^\top \pi  = 0$ if and only if there exists a $\xi^*$ s.t.  $p(\xi^* |\bw, y) = 1$ 
\end{thm}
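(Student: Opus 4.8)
The plan is to compute the posterior marginal $p(\xi \mid \bw, y)$ in closed form by integrating out $\theta$, and then read off exactly when it is degenerate. Since under the pf-sLDA generative model the switches are a priori $\xi_n \sim \mathrm{Bern}(p)$ independently of $\theta$, integrating out $\theta$ gives
\[
p(\xi \mid \bw, y) \;\propto\; p(\xi)\,\Big(\textstyle\prod_n p_\pi(w_n)^{1-\xi_n}\Big)\int p(\theta)\,p(y\mid\theta)\,\prod_n p_\beta(w_n\mid\theta)^{\xi_n}\,d\theta ,
\]
where $p(\xi) = \prod_n p^{\xi_n}(1-p)^{1-\xi_n} > 0$ for $0 < p < 1$. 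So the posterior weight of a configuration $\xi$ is zero exactly when either (i) some position $n$ has $\xi_n = 0$ and $p_\pi(w_n) = 0$, or (ii) the integral vanishes. Because $p_\beta(w_n\mid\theta) = \sum_k \beta_{k,w_n}\theta_k$, the product inside the integral equals $\prod_{n:\xi_n=1}\big(\sum_k \beta_{k,w_n}\theta_k\big)$, a nonnegative polynomial on the simplex; since the Dirichlet density is strictly positive on the interior of the simplex and $p(y\mid\theta) > 0$, this integral is zero iff that polynomial vanishes identically on the simplex, equivalently iff one of its linear factors is identically zero, i.e.\ iff some position $n$ has $\xi_n = 1$ and $\sum_k \beta_{k,w_n} = 0$.

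Next I would package this with two indicators per position $n$: $a_n = I\big(\sum_k \beta_{k,w_n} > 0\big)$ (word $w_n$ supported by $\beta$) and $b_n = I(\pi_{w_n} > 0)$ (word $w_n$ supported by $\pi$). The computation above says $\xi$ has positive posterior weight iff for every $n$ both ``$\xi_n = 1$ or $b_n = 1$'' and ``$\xi_n = 0$ or $a_n = 1$'' hold. Consequently $\xi_n$ is forced to $1$ when $(a_n,b_n)=(1,0)$, forced to $0$ when $(a_n,b_n)=(0,1)$, left free when $(a_n,b_n)=(1,1)$, and $(a_n,b_n)=(0,0)$ cannot occur for an observed word when $p(\bw,y) > 0$. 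Hence the posterior is a point mass iff no observed word is supported by both $\beta$ and $\pi$.

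From here the two directions follow. For $\beta^\top\pi = 0 \Rightarrow p(\xi^\ast\mid\bw,y)=1$: by nonnegativity of all entries, $\beta^\top\pi = 0$ is equivalent to $\beta_{k,v}\pi_v = 0$ for all $k,v$, i.e.\ $\mathrm{supp}(\beta)\cap\mathrm{supp}(\pi)=\emptyset$; then every observed word is in the $(1,0)$ or $(0,1)$ case, $\xi^\ast_n := a_n$ is the unique configuration with positive weight, and $p(\xi^\ast\mid\bw,y)=1$. For the converse I would argue by contraposition: if $\beta^\top\pi \neq 0$ there is a vocabulary word $v$ with $\pi_v > 0$ and $\beta_{k,v} > 0$ for some $k$; taking any document in which $v$ appears, that position is in the $(1,1)$ case, so both $\xi_n = 0$ and $\xi_n = 1$ receive positive posterior weight and the posterior is not degenerate.

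The main obstacle I anticipate is step (ii): justifying that the integral is zero iff the polynomial $\prod_{n:\xi_n=1}\big(\sum_k\beta_{k,w_n}\theta_k\big)$ vanishes identically on the simplex. This needs that the Dirichlet density is strictly positive on the interior of the simplex, that the GLM likelihood $p(y\mid\theta)$ is strictly positive (true for the link functions used here), and that a polynomial vanishing on a set of positive measure in the simplex is the zero polynomial, so a product of linear forms vanishes there iff one factor vanishes at every vertex; the rest is bookkeeping over the four $(a_n,b_n)$ cases. A secondary subtlety is the quantifier over documents: the stated equivalence is cleanest read as holding for all $(\bw,y)$ with positive model probability, which is precisely the regime in which Theorem~2 is invoked inside the proof of Theorem~1.
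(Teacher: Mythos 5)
Your proposal is correct and follows essentially the same route as the paper: both reduce to the factorization $p(\bw,y\mid\xi)\propto\prod_n p_\pi(w_n)^{1-\xi_n}\cdot\int p(\theta)\,p(y\mid\theta)\prod_n p_\beta(w_n\mid\theta)^{\xi_n}\,d\theta$, use strict positivity of the Dirichlet density and the GLM likelihood, and rely on the same ``all vocabulary words are observed'' caveat to pass from observed words to $\beta^\top\pi=0$. The only difference is presentational --- you characterize the full support of $p(\xi\mid\bw,y)$ directly where the paper tests single-bit flips of $\xi^*$ --- and the obstacle you flag about polynomials vanishing on positive-measure sets is immediate here, since each factor is a linear form with nonnegative coefficients and is therefore strictly positive on the interior of the simplex unless identically zero.
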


\begin{proof}

\begin{enumerate}
    \item Assume $\beta^\top \pi  = 0$. Then, conditional on $w_n$, $\xi_n = 1$ with probability 1 if $\pi_{w_n} = 0$ and  $\xi_n = 0$ with probability 1 if $\pi_{w_n} > 0$. So we have $p(\xi^* |\bw, y) = 1$ for the $\xi^*$ corresponding to $\bw$ as described before.
    \item Assume there exists a $\xi^*$ s.t.  $p(\xi^* |\bw, y) = 1$.
    
    Then we have:
    \begin{align*}
        p(\xi^* | \bw, y) = \frac{p(\bw, y | \xi^*) p(\xi^*)}{\sum_{\xi} p(\bw, y | \xi) p(\xi)} = 1 \\
        p(\bw, y | \xi^*) p(\xi^*) = \sum_{\xi} p(\bw, y | \xi) p(\xi)
    \end{align*}
    
    This implies $p(\bw, y | \xi) p(\xi) = 0 \ \forall\  \xi \neq \xi^*$, which implies $p(\bw, y | \xi) = 0 \ \forall\ \xi \neq \xi^*$
    
    Then we have:
    \begin{align*}
        p(\bw, y | \xi) &= p(y |\bw, \xi) p(\bw| \xi) \\ 
        &= \left(\int_\theta p(y | \theta) p(\theta | \bw, \xi) d\theta \right)\left( \int_\theta p(\bw | \theta, \xi) p(\theta) d\theta \right)
    \end{align*}
    
    The first term will be greater than 0 because $y|\theta$ is distributed Normal. We focus on the second term.
    
    \begin{align*}
         \int_\theta p(\bw | \theta, \xi) p(\theta) d\theta = \int_\theta p(\theta) \prod_n p_\beta(w_n | \theta) ^{\xi_n} p_\pi(w_n)^{1 - \xi_n}d\theta
    \end{align*}
    
    Let $X$ be the set of $\xi$ that differ from $\xi^*$ in one and only one position, i.e. $\xi_n = \xi^*_n$ for all $n \in \{1,\dots N\}\setminus \{i\}$ and $\xi_i \neq \xi^*_i$. For each $\xi \in X$, $\int_\theta p(\theta) \prod_n p_\beta(w_n | \theta) ^{\xi_n} p_\pi(w_n)^{1 - \xi_n} = 0$. Since all functions in the integrand are non-negative and continuous, $p_\beta(w_n | \theta) ^{\xi_n} p_\pi(w_n)^{1 - \xi_n} = 0$ for the unique $i$ with $\xi_i \neq \xi_i^*$. Since this holds for every element of $X$, we must have that $p_\beta(w_n | \theta) = 0$ for all $\xi_n = 0$ and $p_\pi(w_n) = 0$ for all $\xi_n = 1$, proving $\beta$ and $\pi$ are disjoint, provided the minor assumption that all words in the vocabulary $w_n$ are observed in the data. In practice all words are observed in the vocabulary because we choose the vocabulary based on the training set.
\end{enumerate}
\end{proof}

Theorems 1 and 2 tell us that if the posterior distribution of the channel switches $\bxi_d$ is independent of the posterior distribution  of the document topic distribution $\theta_d$ for all documents, then the true relevant topics $\beta$ and additional topic $\pi$ must have disjoint support, and moreover the posterior of the channel switches $\bxi$ is a point mass. This suggests that to enforce that $\beta$ and $\pi$ are disjoint, we should choose the variational family such that $\bxi$ and $\btheta$ are independent. 

If $\bxi$ and $\btheta$ are conditionally independent in the posterior, then the posterior can factor as $p(\bxi, \btheta|\by, \bw) = p(\bxi |\by,\bw) p(\btheta | \by, \bw)$. In this case, the posterior for the channel switch of the $n$th word in document $d$, $\xi_{dn}$, has no dependence $d$, which can be seen directly from the graphical model. Thus, choosing $q(\bxi | \varphi)$ to have no dependence on document naturally pushes our assumption into the variational posterior.

Our choices for the variational distributions for $\btheta$ and $\bz$ match those of \citet{mcauliffe2008supervised}. We choose $q(\xi_{dn}|\varphi_{w_{dn}})$ to be a Bernoulli probability mass function with parameter $\varphi_{w_{dn}}$ indexed only by the word $w_{dn}$. This distribution acts as a relaxation of a true point mass posterior, allowing us to use gradient information to optimize over $[0,1]$ rather than directly over $\{0,1\}$.  Moreover, this parameterization allows us to naturally use the variational parameter $\varphi$ as a feature selector; low estimated values of $\varphi$ indicate irrelevant words, while high values of $\varphi$ indicate relevant words. 

\subsection{ELBO (per doc)}
\label{sec:full_elbo}
Let $\Lambda= \{ \alpha, \beta, \eta, \delta, \pi, p \} $. Omitting variational parameters for simplicity:
\begin{align*}
    \log p(\bw,\by | \Lambda) &= \log \int_\theta \sum_z \sum_\xi p(\theta, \bz, \bxi, \bw, \by | \Lambda) d\theta \\
    &= \log E_q \left(\frac{p(\theta, \bz, \bxi, \bw, \by | \Lambda)} {q(\theta, \bz, \bxi)}\right) \\
    &\geq E_q[\log p(\theta, \bz, \bxi, \bw, \by)] - E_q[q(\theta, \bz, \bxi)]
\end{align*}

Let $ELBO = E_q[\log p(\theta, \bz, \bxi, \bw, \by | \Lambda)] - E_q[q(\theta, \bz, \bxi)]$

Expanding this:
\begin{align*}
    ELBO &= E_q[\log p(\theta | \alpha)] + E_q[\log p(\bz|\theta)] + E_q[\log p(y|\theta, \eta, \delta)]\\
    &+ E_q[\log p(\bxi| p)] + E_q[\log p(\bw | \bz, \beta, \bxi, \pi)] \\
    &- E_q[\log q(\theta | \gamma)] - E_q[\log q(\bz | \phi)] - E_q[\log q(\bxi | \varphi)]
\end{align*}
The distributions of each of the variables under the generative model are:
\begin{align*}
    &\theta_d \sim \text{Dirichlet}(\alpha)\\
    &z_{dn} |\theta_d \sim \text{Categorical}(\theta_d)\\
    &\xi_{dn} \sim \text{Bernoulli}(p)\\
    &w_{dn}|z_{dn},\xi_{dn} = 1 \sim \text{categorical}(\beta_{z_{dn}})\\
    &w_{dn}|z_{dn},\xi_{dn} = 0 \sim \text{Categorical}(\pi)\\
    &y_d|\theta_d \sim \text{GLM}(\theta ; \eta, \delta)
\end{align*}
Under the variational posterior, we use the following distributions:
\begin{align*}
    &\theta_d \sim \text{Dirichlet}(\gamma_d)\\
    &z_{dn} \sim \text{Categorical}(\phi_{dn})\\
    &\xi_{dn} \sim \text{Bernoulli}(\varphi_{w_{dn}})
\end{align*}
    
This leads to the following ELBO terms:
\begin{align*}
    E_q[\log p(\theta | \alpha)] &=  \log \Gamma\left(\sum_k \alpha_k\right) - \sum_k \log \Gamma(\alpha_k) 
    + \sum_k(\alpha_k - 1)E_q[\log \theta_k] \\
    E_q[\log p(\bz | \theta)] &= \sum_n \sum_k \phi_{nk}E_q[\log \theta_k] \\
    E_q[\log p(\bw | \bz, \beta, \bxi, \pi)] &= \sum_n \left(\sum_v w_{nv} \varphi_v \right) *\left(\sum_k \sum_v \phi_{nk} w_{nv} \log \beta_{kv}\right) \\
    &+\left(1 - \left(\sum_v w_{nv} \varphi_v \right)\right) \left(\sum_v w_{nv} \log \pi_v \right) \\
    E_q[\log p(\bxi| p)] &= \sum_n \left(\sum_v w_{nv} \varphi_v \right) \log p  + \left(1 - \left(\sum_v w_{nv} \varphi_v \right)\right)\log (1-p) \\
    E_q[q(\theta | \gamma)] &= \log \Gamma\left(\sum_k \gamma_k\right) - \sum_k \log \Gamma(\gamma_k) + \sum_k(\gamma_k - 1)E_q[\log \theta_k] \\
    E_q[q(\bz | \phi)] &= \sum_n \sum_k \phi_{nk} \log \phi_{nk} \\
    E_q[q(\bxi| \varphi)] &= \sum_n \left(\sum_v w_{nv} \varphi_v \right) \log \left(\sum_v w_{nv} \varphi_v \right) \\
    &+ \left(1 - \left(\sum_v w_{nv} \varphi_v \right)\right) \log \left(1 - \left(\sum_v w_{nv} \varphi_v \right)\right) \\
    E_q[\log p(y|\theta, \eta, \delta)] &= \frac{1}{2}\log 2\pi \delta - \frac{1}{2\delta}\left(y^2 -2y\eta^\top E_q[\theta] + \eta^\top E_q[\theta \theta^\top]\eta \right)
\end{align*}

Other useful terms:
\begin{align*}
E_q[\log \theta_k] &= \Psi(\gamma_k) - \Psi\left(\sum_{j=1}^K\gamma_{j}\right) \\
\bar{Z} &:= \frac {\sum_n \xi_n z_n} {\sum_n \xi_n} \in \mathbb{R}^K \\
E_q[\theta] &= \frac {\gamma} {\gamma^\top \bold{1}} \\
\gamma_0 &:= \sum_k \gamma_k \\
\tilde{\gamma}_j &:= \frac {\gamma_j} {\sum_k \gamma_k} \\
E_q[\theta \theta^\top]_{ij} &= \frac{\tilde{\gamma}_i (\delta(i,j) - \tilde{\gamma}_j)}{\gamma_0 + 1} + \tilde{\gamma}_i\tilde{\gamma}_j
\end{align*}

\subsection{Lower Bounds on the Log Likelihood}
\label{sec:lik_lb_deriv}
Remark that the likelihood for the words of one document can be written as follows:     

\[
p(\bw) = \int_{\theta}d\theta p(\theta|\alpha) \left\{\prod_{n=1}^N [p*p_{\beta}(w_n|\theta)+ (1-p)p_{\pi}(w_n)] \right\} 
\]

We would like to derive a lower bound to the joint log likelihood $p(y,\bw)$ of one document that resembles the prediction constrained log likelihood since they exhibit similar empirical behavior. Write $p(y,\bw)$ as $E_{\xi}[p(y|\bw,\xi)p(\bw|\xi)]$ and apply Jensen's inequality:

\[
\log p(y,\bw) \geq E_{\xi}[\log p(y|\bw,\xi)] + E_{\xi}[\log p(\bw|\xi)]
\]
Focusing on the second term we have:
\[
\log p(\bw|\xi) = \log \int_{\theta} d\theta p(\theta|\alpha) \prod_{n=1}^N p_{\beta}(w_n|\theta)^{\xi_n}p_{\pi}(w_n)^{1-\xi_n}
\]
Applying Jensen's inequality again to push the $\log$ further inside the integrals:
\[
\log p(\bw|\xi) \geq \int_{\theta}d\theta p(\theta|\alpha) \Bigg\{ \sum_{i=1}^N\xi_n \log p_{\beta}(w_n|\theta) + \sum_{n=1}^N(1-\xi_n)\log p_{\pi}(w_n) \Bigg\}
\]

Note that $\theta$ and $\xi$ are independent, so we have:

\[
\log p(y,\bw) \geq E[\log p(y|\bw,\xi)] + E\left[\sum_{i=1}^N\xi_n \log p_{\beta}(w_n|\theta) + \sum_{n=1}^N(1-\xi_n)\log p_{\pi}(w_n)\right]
\]

where the expectation is taken over the $\xi$ and $\theta$ priors. This gives the final bound:

\[
\log p(y,\bw) \geq E[\log p_{\beta}(y|W_1(\xi))] + pE[\log p_{\beta}(\bw|\theta)] + (1-p)\log p_{\pi}(\bw)
\]

We have used the substitution: $p(y|\bw,\xi) = p_{\beta}(y|W_1(\xi))$. Conditioning on $\xi$, $y$ is independent from the set of $w_n$ with $\xi_n = 0$, so we denote $W_1(\xi)$ as the set of $w_n$ with $\xi_n = 1$. It is also clear that $p(y|W_1(\xi), \xi) = p_\beta(y|W_1(\xi))$. By linearity of expectation, this bound can easily be extended to all documents.

Note that this bound is undefined on the constrained parameter space: $\beta^\top \pi = 0$; if $p\neq 0$ and $p\neq 1$. This is clear because $\log p_{\pi}(\bw)$ or $\log p_{\beta}(w_n|\theta)$ is undefined with probability 1. We can also see this directly, since $p(y,\bw|\xi)$ is non-zero for exactly one value of $\xi$ so $E[\log p(y,\bw|\xi)]$ is clearly undefined. We derive a tighter bound for this particular case as follows. Define $\xi^*(\pi, \beta, \bw)$ as the unique $\xi$ such that $p(\bw|\xi)$ is non-zero. We can write $p(y,\bw) = p(y,W | \xi^*(\pi, \beta, \bw)) p(\xi^*(\pi, \beta, \bw))$. For simplicity, I use the notation $\xi^*$ but keep in mind that it's value is determined by $\beta$, $\pi$ and $\bw$. Also remark that the posterior of $\xi$ is a point mass as $\xi^*$. If we repeat the analysis above we get the bound:
\[
\log p(y,\bw) \geq p_\beta(y| W_1(\xi^*)) + E\left[\sum_{n=1}^N\xi^*p_\beta(w_n|\theta)\right] + \sum_{n=1}^N(1-\xi^*)\log p_{\pi}(w_n) + p(\xi^*)
\]
which is to be optimized over $\beta$ and $\pi$. Note that the $p(\xi^*)$ term is necessary because of its dependence on $\beta$ and $\pi$.  
 Comparing this objective to our ELBO, we make a number of points. The true posterior is $\xi^*$ which would ordinarily require a combinatorial optimization to estimate; however we introduce the continuous variational approximation $\xi \sim Bern(\varphi)$. Note that the true posterior is a special case of our variational posterior (when $\varphi =1$ or $\varphi = 0$). Since the parameterization is differentiable, it allows us to estimate $\xi^*$ via gradient descent. Moreover, the parameterization encourages $\beta$ and $\pi$ to be disjoint without explicitly searching over the constrained space. Empirically, the estimated set of $\varphi$ are correct in simulations, and correct given the learned $\beta$ and $\pi$ on real data examples.

\subsection{Implementation details}
\label{sec:impl_deets}
Code base: \url{https://github.com/jasonren12/PredictionFocusedTopicModel}

In general, we treat $\alpha$ (the prior for the document topic distribution) as fixed (to a vector of ones). We tune pc-SLDA using \cite{hughes2017prediction}'s code base, which does a small grid search over relevant parameters. We tune sLDA and pf-sLDA using our own implementation and SGD. $\beta$ and $\pi$ are initialized with small, random (exponential) noise to break symmetry. We optimize using ADAM with initial step size 0.025.

We model real targets as coming from $N(\eta^\top \theta, \delta)$ and binary targets as coming from Bern$(\sigma(\eta^\top  \theta))$

\subsection{pf-sLDA likelihood and prediction constrained training.}
\label{sec:pf-sLDA_true_lik_to_pc}
The pf-sLDA marginal likelihood for one document and target can be written as:
\begin{align*}
    p(\bw, y) &= p(y | \bw) \int_\theta \sum_{\xi} p(\bw, \theta, \xi)\\ 
    &= p(y | \bw) \int_\theta p(\theta | \alpha) \prod_n\big\{ p * p_\beta(w_n | \theta, \xi_n = 1, \beta) + (1-p) * p_\pi(w_n | \xi_n = 0, \pi)\big\}
\end{align*}
where $n$ indexes over the words in the document. We see there still exist the $p(y|\bw)$ and $p * p_\beta(\bw)$ that are analogous to the prediction constrained objective, though the precise form is not as clear.

\subsection{Coherence details}
\label{sec:coherence_deets}
We calculate coherence for each topic by taking the top 50 most likely words for the topic, calculating the pointwise mutual information for each possible pair, and averaging. These terms are defined below.
\begin{align*}
    \text{coherence} &= \frac{1}{N (N-1)} \sum_{w_i,w_j \in \text{TopN}} \text{pmi}(w_i, w_j) \\ 
    \text{pmi}(w_i, w_j) &= \log \frac {p(w_i) p(w_j)}{p(w_i, w_j)} \\ 
    p(w_i) &= \frac{\sum_d I(w_i \in \text{doc d})} M \\
    p(w_i, w_j) &= \frac{\sum_d I(w_i \text{ and } w_j \in \text{doc d})} M
\end{align*} 
where $M$ is the total number of documents and $N=50$ is the number of top words in a topic.

The final coherence we report for a model is the average of all the topic coherences.

\subsection{Data set details}
\label{sec:dataset_deets}
\begin{itemize}
     \item Pang and Lee's movie review data set \citep{Pang2005SeeingSE}: There are 5006 documents. Each document represents a movie review, and the documents are stored as bag of words and split into 3754/626/626 for train/val/test. After removing stop words and words appearing in more than $50\% $ of the reviews or less than $10$ reviews, we get $|V| = 4596$. The target is an integer rating from $1$ (worst) to $10$ (best).
     \item Electronic health records (EHR) data set of patients with Autism Spectrum Disorder (ASD), introduced in \cite{Masood2018APV}: There are 3804 documents. Each document represents the EHR of one patient, and the features are possible diagnoses. The documents are split into 3423/381 for train/val, with $|V| = 3600$. The target is a binary indicator of presence of epilepsy.
 \end{itemize}

\end{document}